\def\grd@save@target#1{%
  \def\grd@target{#1}}
\def\grd@save@start#1{%
  \def\grd@start{#1}}
\tikzset{
  grid with coordinates/.style={
    to path={%
      \pgfextra{%
        \edef\grd@@target{(\tikztotarget)}%
        \tikz@scan@one@point\grd@save@target\grd@@target\relax
        \edef\grd@@start{(\tikztostart)}%
        \tikz@scan@one@point\grd@save@start\grd@@start\relax
        \draw[minor help lines] (\tikztostart) grid (\tikztotarget);
        \draw[major help lines] (\tikztostart) grid (\tikztotarget);
        \grd@start
        \pgfmathsetmacro{\grd@xa}{\the\pgf@x/1cm}
        \pgfmathsetmacro{\grd@ya}{\the\pgf@y/1cm}
        \grd@target
        \pgfmathsetmacro{\grd@xb}{\the\pgf@x/1cm}
        \pgfmathsetmacro{\grd@yb}{\the\pgf@y/1cm}
        \pgfmathsetmacro{\grd@xc}{\grd@xa + \pgfkeysvalueof{/tikz/grid with coordinates/major step}}
        \pgfmathsetmacro{\grd@yc}{\grd@ya + \pgfkeysvalueof{/tikz/grid with coordinates/major step}}
        \foreach \x in {\grd@xa,\grd@xc,...,\grd@xb}
        \node[anchor=north] at (\x,\grd@ya) {\pgfmathprintnumber{\x}};
        \foreach \y in {\grd@ya,\grd@yc,...,\grd@yb}
        \node[anchor=east] at (\grd@xa,\y) {\pgfmathprintnumber{\y}};
      }
    }
  },
  minor help lines/.style={
    help lines,
    step=\pgfkeysvalueof{/tikz/grid with coordinates/minor step}
  },
  major help lines/.style={
    help lines,
    line width=\pgfkeysvalueof{/tikz/grid with coordinates/major line width},
    step=\pgfkeysvalueof{/tikz/grid with coordinates/major step}
  },
  grid with coordinates/.cd,
  minor step/.initial=.2,
  major step/.initial=1,
  major line width/.initial=1pt,
}
\theoremstyle{definition}
\theoremstyle{plain}
\newtheorem{theorem}{Theorem}
\newtheorem{proposition}{Proposition}
\newtheorem{lemma}{Lemma}
\theoremstyle{definition}
\theoremstyle{remark}
\newtheorem{remark}{Remark}
\newcommand{\D}{\mathbf{D}}
\newcommand{\tr}{\mathbf{trace}}
\newcommand{\Scal}{\mathcal{S}}
\newcommand{\Rbb}{\mathbb{R}}
\title{On the exact recovery of sparse signals via conic relaxations}
\author{Hongbo Dong\thanks{Department of Mathematics and Statistics, Washington State University, Pullman, WA 99163}}
\date{\today}                                           % Activate to display a given date or no date
\begin{document}
\maketitle

\begin{abstract}
In this note we compare two recently proposed semidefinite relaxations for the sparse linear
regression problem by Pilanci, Wainwright and El Ghaoui ("\textit{Sparse learning via boolean relaxations}", 2015) and Dong, Chen and 
Linderoth (''\textit{Relaxation vs. Regularization: A conic optimization perspective of statistical variable selection}", 2015).
We focus on the cardinality constrained formulation, and prove that the relaxation proposed by Dong, etc. is
theoretically no weaker than the one proposed by Pilanci, etc. Therefore any sufficient conditions of exact recovery derived by 
Pilanci can be readily applied to the Dong's relaxation, including their results on high probability recovery for Gaussian ensemble. 
Finally we provide empirical evidence that Dong's relaxation requires much fewer observations to guarantee the recovery of true support.
\end{abstract}

\section{Two convex relaxations for sparse linear regression}

Given a collection of observed sample points $(x_i, y_i) \in \Rbb^p \times \mathcal{Y}$, the goal 
of a sparse learning task is to learn a linear function $x \mapsto \beta^T x$ that is then used 
to predict an outcome of $y\in \mathcal{Y}$ for future/unseen data, where $\beta$ 
is restricted to have a small number of nonzero entries). 
Such a task can be modeled as the following cardinality constrained optimization problem
\begin{equation}\label{SpML}
\min_{\substack{\beta \in \Rbb^p, \\ \|\beta\|_0 \leq k} } 
\frac{1}{n} \sum_{i=1}^n f \left(\beta^T x_i; y_i\right).
\end{equation}

With the cardinality constraint, (\ref{SpML}) is usually highly nonconvex and difficulty
to solve to global optimality. The authors in \cite{PilanciWainwrightGhaoui2015} 
considered the following regularized version, 
\begin{equation}\label{SpMLreg}
\min_{\substack{\beta \in \Rbb^p, \\ \|\beta\|_0 \leq k} } 
\frac{1}{n} \sum_{i=1}^n f \left(\beta^T x_i; y_i\right)+ \rho\|\beta\|_2^2.
\end{equation}

One of the key results in \cite{PilanciWainwrightGhaoui2015} shows that (\ref{SpMLreg})
can be equivalently formulated as minimizing a convex function over a subset of binary vectors,
\begin{equation}\label{SpMLbin}
\min_{\substack{z \in \{0,1\}^p, \\ \sum_{j} z_j \leq k} } 
\ \ \ \underbrace{\max_{v \in \Rbb^n} \left\{-\frac{1}{2\rho} v^T X \D(z) X^T v - \sum_{i=1}^n f^*(v_i; y_i) \right\}}_{G(z)},
\end{equation}
where $G(z)$ is convex because as it is the max function of infinite many linear functions,
and $f^*(v,y) := \sup_{t\in \Rbb} \left\{st - f(t, y)\right\}$ is the conjugate function of $f(\cdot; y)$.

In this note we focus on the important special case of sparse linear regression, i.e., we consider
the following cardinality-constrained quadratic program,
\begin{equation}\label{l0_card}
\nu_{\ell0} := \min_{\beta} \ \  \frac{1}{2} \|X\beta - y\|_2^2 + \frac{1}{2}\rho\|\beta\|_2^2, \ \ s.t. \ \ \|\beta\|_0 \leq k,
\tag{$\ell_0:{card}$}
\end{equation}

The authors of \cite{PilanciWainwrightGhaoui2015} further proposed to relax the binary condition $z \in \{0,1\}^p$ in 
(\ref{SpMLbin}) to $z \in [0,1]^p$, and studied the conditions under which such a relaxation is exact.
When specialized to the sparse linear regression problem, the continuous relaxation takes
the following form of a semidefinite program,
\begin{equation}\label{SDP_PWG}
\begin{aligned}
\nu_{PWG} := \min_{t \in \Re, z \in [0,1]^p} \ \ & 0.5 t  \\
 s.t. \ \ & \begin{bmatrix}t & y \\ y & I_n + \frac{1}{\rho} X \D(z) X^T \end{bmatrix} \succeq 0,  \ \ \ e^T z \leq k,
\end{aligned}\tag{$SDP_{PWG}$}
\end{equation}
where $e$ is a vector with all entries 1 in proper dimension, and $\D(z)$ is a diagonal matrix whose entries are $z_i, i=1,...,p$.
It can also be equivalently written as the following compact form,
\[
\nu_{PWG} = \frac{1}{2} \min_{z \in [0,1]^p, e^T z \leq k} \ \ y^T \left(\frac{1}{\rho} X \D(z) X^T + I_n\right)^{-1} y.
\]

Following a different approach, authors of \cite{DongChenLinderoth2015} recently proposed another semidefinite relaxation
for sparse linear regression where the $\ell$-0 norm appears as a regularized term. When modified as a convex relaxation
for the cardinality constrained form (\ref{SpMLbin}), their proposed semidefinite relaxation is,
\begin{equation}\label{SDP_DCL}
\begin{aligned}
\nu_{DCL} := \min_{b \in \Rbb^p, B \in \Scal^p} \ \ & \frac{1}{2}\left\langle \begin{bmatrix}y^T y & -y^T X \\ -X^T y & \rho I_p +  X^T X \end{bmatrix}, 
\begin{bmatrix}1 & b^T \\ b & B\end{bmatrix}\right\rangle \\
s.t. \ \ & \begin{bmatrix}1 & b^T \\ b & B\end{bmatrix} \succeq 0 \\
&  \begin{bmatrix}z_i & b_i \\ b_i & B_{ii}\end{bmatrix} \succeq 0, \forall i,  \ \  \sum_{i=1}^p z_i \leq k.
\end{aligned}\tag{$SDP_{DCL}$}
\end{equation}

In this note we compare these two semidefinite relaxations. We show that the relaxation (\ref{SDP_DCL}) 
is no weaker than (\ref{SDP_PWG}) in this section. In section \ref{sec:cert} we establish
a result that characterizes a certificate of exactness for the convex relaxation (\ref{SDP_DCL}), hence extends
a key result in \cite{PilanciWainwrightGhaoui2015} to (\ref{SDP_DCL}). Section \ref{sec:empirical} concerns 
the probability of exact recovery for the case of Gaussian ensemble, where we show empirically (\ref{SDP_DCL})
can recover the true support of with much less data points.

We first state a technical lemma that will be used soon. 
\begin{lemma}\label{lem:ridge}
For any $X \in \Re^{n\times p}$ and $\rho > 0$, we have
\[
\min_{\beta \in \Rbb^p} \left\{\frac{1}{2} \|X\beta - y\|_2^2 + \frac{1}{2}\rho\|\beta\|_2^2 \right\}  \ = \
\frac{1}{2} y^T \left(\frac{1}{\rho} X X^T + I_n\right)^{-1} y
\]
\end{lemma}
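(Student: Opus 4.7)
The plan is to solve the unconstrained quadratic minimization on the left-hand side in closed form, substitute the minimizer back into the objective, and then reconcile the resulting $p$-dimensional matrix inverse with the $n$-dimensional inverse on the right-hand side via the Sherman--Morrison--Woodbury identity.

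First I would note that since $\rho > 0$ the objective is strongly convex in $\beta$, so the unique minimizer is characterized by the first-order condition $X^T(X\beta - y) + \rho \beta = 0$, yielding $\beta^\star = (X^T X + \rho I_p)^{-1} X^T y$. Plugging $\beta^\star$ back into the objective and simplifying (using $X^T X \beta^\star + \rho \beta^\star = X^T y$, so $\|X\beta^\star\|_2^2 + \rho \|\beta^\star\|_2^2 = y^T X\beta^\star$) gives the optimal value
\[
\frac{1}{2} y^T y - \frac{1}{2} y^T X (X^T X + \rho I_p)^{-1} X^T y.
\]

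Next I would apply the Woodbury identity with $A = I_n$, $U = X$, $C = \tfrac{1}{\rho} I_p$, $V = X^T$, which states
\[
\bigl(I_n + \tfrac{1}{\rho} X X^T\bigr)^{-1} = I_n - X(\rho I_p + X^T X)^{-1} X^T.
\]
Sandwiching this identity between $y^T$ and $y$ matches the expression displayed above, so the two sides agree.

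Since both steps are standard, there is no real obstacle here; the only thing to watch is the bookkeeping of the factor $\tfrac{1}{\rho}$ so that the Woodbury application produces exactly the matrix $\tfrac{1}{\rho} X X^T + I_n$ appearing in the statement rather than a rescaled version. If one prefers to avoid invoking Woodbury explicitly, an alternative route is to verify the identity by clearing denominators: multiplying both $p \times p$ and $n \times n$ forms through by the relevant matrix products reduces the claim to the trivial algebraic identity $\rho I_p + X^T X = \rho I_p + X^T X$.
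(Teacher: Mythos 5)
Your proof is correct and is precisely the ``straightforward computation'' the paper alludes to: minimize the strongly convex ridge objective in closed form, substitute back, and convert the $p\times p$ inverse to the $n\times n$ inverse via the Sherman--Morrison--Woodbury identity (the same identity the paper itself invokes in the appendix). Nothing is missing.
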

\begin{proof}
Straightforward computation.
\end{proof}

By Lemma \ref{lem:ridge}, (\ref{SDP_PWG}) can be reformulated as
\begin{equation}\label{PWG:card:Poly}
\nu_{PWG} = 
\min_{z \in [0,1]^p, e^T z \leq k} \ \ 
\min_{\beta \in \Rbb^p} \ \  \frac{1}{2} \left\|X\sqrt{\D(z)}\beta - y\right\|_2^2 + \frac{1}{2}\rho
\left\|\beta\right\|_2^2,
\end{equation}
where $\sqrt{\D(z)}$ is a diagonal matrix with the i-th diagonal entry $\sqrt{z_i}$. 

\begin{proposition}\label{prop:dominate} $\nu_{\ell 0} \geq \nu_{DCL} \geq \nu_{PWG}$.
\end{proposition}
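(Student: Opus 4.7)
The inequality $\nu_{\ell 0} \geq \nu_{DCL}$ would follow from lifting any feasible $\beta$ of (\ref{l0_card}) into $(SDP_{DCL})$ with the same objective value. Given $\beta$ with $\|\beta\|_0 \leq k$, I would set $b = \beta$, $B = \beta\beta^T$, and $z_i = 1$ on the support of $\beta$ and $z_i = 0$ elsewhere. The outer PSD constraint is immediate since $\begin{bmatrix}1 & b^T \\ b & B\end{bmatrix}$ is the rank-one matrix $\begin{bmatrix}1 \\ \beta\end{bmatrix}\begin{bmatrix}1 \\ \beta\end{bmatrix}^T$; each $2\times 2$ block reduces to a rank-one PSD matrix (when $i$ is in the support) or to the zero matrix (when $\beta_i = 0$); the cardinality constraint matches $\|\beta\|_0 \leq k$ exactly; and expanding the trace objective recovers $\tfrac{1}{2}\|X\beta - y\|_2^2 + \tfrac{1}{2}\rho \|\beta\|_2^2$.

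For the second inequality $\nu_{DCL} \geq \nu_{PWG}$, I would work through the reformulation (\ref{PWG:card:Poly}). Pick any $(b, B, z)$ feasible for (\ref{SDP_DCL}), assuming (as is implicit in the Boolean relaxation framework) that $z \in [0,1]^p$. Define $\beta \in \Rbb^p$ coordinatewise by $\beta_i := b_i/\sqrt{z_i}$ when $z_i > 0$ and $\beta_i := 0$ otherwise. When $z_i = 0$, the PSD block $\begin{bmatrix}z_i & b_i \\ b_i & B_{ii}\end{bmatrix} \succeq 0$ forces $b_i = 0$, so uniformly $\sqrt{z_i}\,\beta_i = b_i$; hence $X\sqrt{\D(z)}\beta = Xb$, and $(z,\beta)$ is feasible for (\ref{PWG:card:Poly}).

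The heart of the argument is a pair of trace inequalities connecting $B$ to $\beta$. From the $2\times 2$ block $\begin{bmatrix}z_i & b_i \\ b_i & B_{ii}\end{bmatrix} \succeq 0$ we get $z_i B_{ii} \geq b_i^2$, hence $B_{ii} \geq \beta_i^2$ and $\tr(B) \geq \|\beta\|_2^2$. From the outer constraint $\begin{bmatrix}1 & b^T \\ b & B\end{bmatrix} \succeq 0$, the Schur complement gives $B \succeq b b^T$, and pairing with $X^T X \succeq 0$ yields $\tr(X^T X B) \geq b^T X^T X b$. Expanding the $(SDP_{DCL})$ objective as $\tfrac{1}{2}[y^T y - 2 y^T X b + \rho \tr(B) + \tr(X^T X B)]$ and substituting these two inequalities shows it is bounded below by $\tfrac{1}{2}\|Xb - y\|_2^2 + \tfrac{1}{2}\rho \|\beta\|_2^2 = \tfrac{1}{2}\|X\sqrt{\D(z)}\beta - y\|_2^2 + \tfrac{1}{2}\rho \|\beta\|_2^2$, which by (\ref{PWG:card:Poly}) is at least $\nu_{PWG}$.

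The step requiring most care is the coordinate-wise construction of $\beta$: the division by $\sqrt{z_i}$ is precisely what collapses the $(SDP_{PWG})$ residual into a function of $b$ alone, while simultaneously allowing $\tr(B)$ to dominate $\|\beta\|_2^2$ through the small PSD blocks. The only other subtlety is the convention $z \leq e$ on $(SDP_{DCL})$, which is natural since $z$ relaxes a binary indicator and can be added to the formulation without changing its value.
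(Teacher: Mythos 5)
Your proof is correct and follows essentially the same route as the paper's: the key steps --- defining $\beta_i = b_i/\sqrt{z_i}$, using the small blocks $z_i B_{ii} \geq b_i^2$ to get $\tr(B) \geq \|\beta\|_2^2$, using $B \succeq bb^T$ to get $\tr(X^T X B) \geq \|Xb\|_2^2$, and then invoking the reformulation (\ref{PWG:card:Poly}) --- are exactly those in the paper. The only cosmetic differences are that the paper handles the absent $z \leq e$ constraint by the WLOG replacement $z_i = b_i^2/B_{ii}$ (automatically in $[0,1]$ because $B_{ii} \geq b_i^2$) rather than by appending the constraint, and it leaves implicit the easy lifting argument for $\nu_{\ell 0} \geq \nu_{DCL}$ that you spell out.
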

\begin{proof}
Suppose that $(\tilde{b}, \tilde{B}, \tilde{z})$ is optimal in (\ref{SDP_PWG}). Without loss of generality we may assume that $\tilde{z}_i  = \frac{\tilde{b}_i^2}{\tilde{B}_{ii}}$ for all $\tilde{B}_{ii} \neq 0$,
and $\tilde{z}_i = 0$ otherwise. Therefore $\tilde{z}_i \in [0,1], \forall i$.
Define $\tilde{\beta}$ as 
\[
\tilde{\beta}_i = \begin{cases}\tilde{z}^{-\frac{1}{2}}_i \tilde{b}_i, & \ if \ \tilde{z}_i > 0\\
0 & \ if \ \tilde{z}_i = 0.\end{cases}
\]
Then obviously
$\tilde{b} = \sqrt{\D(\tilde{z})} \tilde{\beta}$ and $(\tilde{\beta}, \tilde{z})$ is feasible in (\ref{PWG:card:Poly}). We have
\begin{align*}
\frac{1}{2}\left\langle \begin{bmatrix}y^T y & -y^T X \\ -X^T y & \rho I_n + 
 X X^T \end{bmatrix}, 
\begin{bmatrix}1 & \tilde{b}^{T} \\ \tilde{b} & \tilde{B}\end{bmatrix}\right\rangle
\geq &
\frac{1}{2}\left\langle \begin{bmatrix}y^T y & -y^T X \\ -X^T y &  
 X X^T \end{bmatrix}, 
\begin{bmatrix}1 & \tilde{b}^{T} \\ \tilde{b} & \tilde{b}\tilde{b}^T\end{bmatrix}\right\rangle + 
\frac{1}{2}\rho \tr(\tilde{B}) \\
 \geq & \frac{1}{2} \left\|X \sqrt{\D(\tilde{z})} \tilde{\beta} - y\right\|_2^2 + 
 \frac{1}{2} \rho \left\|\tilde{\beta}\right\|_2^2 \geq \nu_{PWG}.
\end{align*}
The first inequality is because of $\tilde{B} \succeq \tilde{b}\tilde{b}^T$. The second inequality 
is because of $\tilde{B}_{ii} \tilde{z}_i \geq \tilde{b}_i$, which implies
$\tilde{B}_{ii} \geq \tilde{\beta}_i^2$, and the final inequality is by the characterization (\ref{PWG:card:Poly}).
\end{proof}

As (\ref{SDP_DCL}) satisfies the Slater condition, strong duality holds and the dual of (\ref{SDP_DCL}) is

\begin{equation}\label{SDP_DCL:dual}
\begin{aligned}
\nu_{DCL} = \frac{1}{2}y^T y + \max_{\tau, \lambda, t, d}& \ \  - \frac{1}{2}\tau -  \frac{1}{2}k \lambda\\
s.t. & \ \ 
\begin{bmatrix}
\tau & -y^T X - t^T \\
-X^T y - t & X^T X + \rho I - \D(d)
\end{bmatrix} \succeq 0 \\
& \ \ \begin{bmatrix}
\lambda & t_i  \\
t_i & d_i
\end{bmatrix} \succeq 0 ,\forall i.
\end{aligned}\tag{$SDP_{DCL}: dual$}
\end{equation}

\section{Certificate of exactness}\label{sec:cert}
%Our previous result that our convex relaxation (\ref{SDP_DCL}) is no weaker than (\ref{SDP_PWG}), i.e., 
Proposition \ref{prop:dominate} implies that 
if $\nu_{PWG} = \nu_{\ell 0}$, then $\nu_{DCL} = \nu_{\ell 0}$.
%Authors of \cite{PilanciWainwrightGhaoui2015} provided several sufficient conditions under which $\nu_{PWG} = \nu_{\ell 0}$. 
Therefore all sufficient conditions for the exactness of (\ref{SDP_PWG}) readily carry over to (\ref{SDP_DCL}). 
Authors of \cite{PilanciWainwrightGhaoui2015} provided a characterization of a \textit{certificate of exactness} for the continuous relaxation
of (\ref{SpMLbin}), as well as a specialized result on (\ref{SDP_PWG}). We restate their characterization result in Theorem \ref{pwg-cert}, and provide 
a parallel result in Theorem \ref{thm:dualcert} for (\ref{SDP_DCL}).
\begin{theorem}[Corollary 2 in \cite{PilanciWainwrightGhaoui2015}]\label{pwg-cert}
The convex relaxation (\ref{SDP_PWG}) is exact if and only if there is a subset $S \subseteq \{1,...,p\}$, where $|S| \leq k$, such that there exists $\lambda \in \Rbb_+$,
\begin{align}
|X_j^T M y | > \lambda, \ \ & \forall j \in S, \mbox{ and } \label{pwg_1}\\
|X_j^T M y | \leq \lambda, \ \ & \forall j \notin S, \label{pwg_2}
\end{align}
where $X_j\in \Rbb^n$ is the j-th column of $X$, and $M := \left(I_n + \rho^{-1} X_S X_S^T\right)^{-1}$.
\end{theorem}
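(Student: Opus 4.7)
My plan is to use Lemma \ref{lem:ridge} to rewrite (\ref{SDP_PWG}) as the convex program
\[
\min_{z \in [0,1]^p,\ e^T z \leq k}\ g(z),\qquad g(z):=y^T A(z)^{-1}y,\quad A(z):=I_n+\rho^{-1}X\D(z)X^T,
\]
and then characterize exactness through its KKT conditions. (At a binary $z^\star$ supported on $S$, $A(z^\star)^{-1}$ coincides with the matrix $M$ in the theorem statement.) Since $g$ is smooth and convex and $\nu_{PWG}$ lower bounds $\nu_{\ell0}$ (which equals the minimum of $g$ over binary feasible $z$), exactness of the relaxation is equivalent to $g$ attaining its continuous minimum at a binary point $z^\star$ supported on some $S$ with $|S|\le k$, which moreover is the unique minimizer.

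First I would compute the gradient
\[
\partial_j g(z) \;=\; -\rho^{-1}\bigl(X_j^T A(z)^{-1} y\bigr)^2,
\]
confirming $g$ is coordinatewise nonincreasing. Writing the Lagrangian with multiplier $\tau\ge 0$ for $e^T z\le k$ and $\mu_j,\eta_j\ge 0$ for $z_j\le 1$ and $z_j\ge 0$, stationarity at $z^\star$ reads
\[
\rho^{-1}\bigl(X_j^T A(z^\star)^{-1} y\bigr)^2 \;=\; \tau+\mu_j-\eta_j,\qquad j=1,\ldots,p.
\]
For necessity, evaluating at binary $z^\star$ with support $S$, complementary slackness kills $\eta_j$ on $S$ and $\mu_j$ off $S$, while $A(z^\star)^{-1}=M$. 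Hence $|X_j^T M y|^2\ge \rho\tau$ on $S$ and $\le \rho\tau$ off $S$; any $\lambda$ strictly separating these two value sets then serves as the certificate. For sufficiency, given $(S,\lambda)$ satisfying the certificate, the $0/1$ indicator of $S$ produces a KKT point with $\tau=\lambda^2/\rho$, $\mu_j=\rho^{-1}|X_j^T M y|^2-\tau>0$ on $S$, and $\eta_j=\tau-\rho^{-1}|X_j^T M y|^2\ge 0$ off $S$, hence a minimizer by sufficiency of KKT for convex programs, and strict complementarity on $S$ supplies uniqueness.

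The main obstacle is tying the strict gap $\max_{k\notin S}|X_k^T M y|<\min_{j\in S}|X_j^T M y|$ to exactness in the necessity direction, since KKT alone yields only weak inequalities. My approach would be to exploit the stronger reading of exactness as unique identifiability of $S$: a perturbation analysis based on the Sherman--Morrison update for $A(z)^{-1}$, applied to displacements $\epsilon(e_{k_0}-e_{j_0})$ with $j_0\in S$ and $k_0\notin S$ sharing a common boundary value $|X_{j_0}^T M y|=|X_{k_0}^T M y|$, should show that the swap $(S\setminus\{j_0\})\cup\{k_0\}$ furnishes an alternative binary optimum, contradicting unique recovery and forcing the strict gap. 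A subsidiary subtlety is the case $|S|<k$, where complementary slackness forces $\tau=0$; the certificate then reduces to $X_j^T M y=0$ on the complement and $X_j^T M y\ne 0$ on $S$, which I would verify directly from the normal equations of ridge regression restricted to $X_S$.
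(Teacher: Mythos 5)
First, note that the paper does not prove this statement at all: it is quoted verbatim as Corollary~2 of \cite{PilanciWainwrightGhaoui2015}, so there is no in-paper proof to compare against. Judged on its own terms, your reduction of (\ref{SDP_PWG}) to minimizing $g(z)=y^TA(z)^{-1}y$ over $\{z\in[0,1]^p:\ e^Tz\le k\}$, the gradient formula $\partial_j g(z)=-\rho^{-1}(X_j^TA(z)^{-1}y)^2$, and the KKT bookkeeping are all correct, and your sufficiency argument (strict complementarity $\mu_j>0$ on $S$ forces any other optimal $z'$ to have $z'_j=1$ on $S$, whence the cardinality constraint pins down $z'=1_S$ when $|S|=k$) is sound.

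The genuine gap is in the necessity of the \emph{strict} separation, and your proposed fix does not work. If $j_0\in S$, $k_0\notin S$ share the boundary value, the directional derivative of $g$ at $z^\star=1_S$ along $d=e_{k_0}-e_{j_0}$ is zero, but that does not make the swapped point an alternative optimum: writing $\dot A=\rho^{-1}(X_{k_0}X_{k_0}^T-X_{j_0}X_{j_0}^T)$, the second derivative along $d$ is $2\,w^TA^{-1}w$ with $w=\dot A A^{-1}y=\rho^{-1}\bigl(X_{k_0}(X_{k_0}^TMy)-X_{j_0}(X_{j_0}^TMy)\bigr)$, which is strictly positive unless $X_{k_0}=\pm X_{j_0}$. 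So $g$ is generically strictly convex along the swap direction and $1_S$ stays the unique minimizer despite the tie. Concretely, take $n=p=2$, $X=I_2$, $y=(2,1)^T$, $\rho=1$, $k=1$: then $g(z)=\tfrac{4}{1+z_1}+\tfrac{1}{1+z_2}$ is strictly convex, its unique feasible minimizer is the binary point $(1,0)$, yet $M=\mathrm{diag}(1/2,1)$ gives $|X_1^TMy|=|X_2^TMy|=1$, so no $\lambda$ satisfies (\ref{pwg_1})--(\ref{pwg_2}) for $S=\{1\}$ (nor for any other admissible $S$). Hence ``the relaxed problem has a unique, integral minimizer'' simply does not imply the strict gap, and the only-if direction cannot be recovered from uniqueness of the minimizer of $g$; it hinges on the precise notion of exactness used in \cite{PilanciWainwrightGhaoui2015} (their argument works with the linearization of $G$ at the dual witness, i.e.\ with uniqueness of the minimizer of the \emph{linear} function $z\mapsto-\sum_j(X_j^TMy)^2z_j$ over the polytope, which is where the strict top-$k$ gap genuinely comes from). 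A smaller issue: when $|S|<k$ your reading forces $\tau=0$, but then complementary slackness for $e^Tz\le k$ also fails for any $\lambda>0$, so the $|S|\le k$ case needs more care than the statement suggests.
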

Using this result, the authors were able to prove a high-probability exact recovery condition for the special case of Gaussian ensembles. We leave the discussion
of Gaussian ensemble in the next section. Here we provide a parallel characterization of certificates of exactness for (\ref{SDP_DCL}).
\begin{theorem}\label{thm:dualcert}
Let $S \subseteq \{1,...,n\}$, $|S| = k$ and $z^*$ be a binary vector such that $z^*_i = 1, \forall i \in S$ and $z^*_i = 0, \forall i
\notin S$. Further let $b^*$ be the optimal solution of the ridge regression in the restricted subspace, i.e.,
\[
b^* \in \arg\min_{\beta\in \Rbb^p} \left\{ \|X\beta - y\|_2^2 + \rho\|\beta\|_2^2 \ \middle| \ \beta_j = 0, \forall j\notin S \right\}
\]
Then $(b^*, b^* b^{*T}, z^*)$ is optimal to (\ref{SDP_DCL})
% if and only if there exists a vector $d \in \Rbb^p_+$ and 
%scalar $\lambda$ such that
%\begin{align}
%X^T X + \rho I_p - \D(d) &\succeq 0, \label{cond:1}\\
%\lambda = d_i \rho^{-2} v_i^2, \ &\forall i \in S, \label{cond:2}\\
%\lambda d_i \geq v_i^2, \ &\forall i\notin S \label{cond:3}
%\end{align}
%where $v_i = X_i^T \left(\rho I_n + X_S X_S^T\right)^{-1} y, \forall i$.
 if and only if there exists a vector $\tilde{d} \in \Rbb^p_+$ and 
scalar $\tilde\lambda \in \Rbb_+$ such that
\begin{align}
\rho^{-1} X^T X +  I_p - \D(\tilde{d}) \succeq 0, &\label{cond:1}\\
\tilde\lambda = \tilde{d}_i \left(X_i^T M y \right)^2, &\qquad \forall i \in S, \label{cond:2}\\
\tilde\lambda \tilde{d}_i \geq  \left(X_i^T M y \right)^2, &\qquad \forall i\notin S \label{cond:3}
\end{align}
where $M := \left(I_n + \rho^{-1} X_S X_S^T\right)^{-1}$.
\end{theorem}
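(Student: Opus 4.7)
The plan is to apply the KKT conditions for the primal-dual pair (\ref{SDP_DCL})--(\ref{SDP_DCL:dual}). Since the author has already observed that Slater's condition holds, strong duality gives that the primal optimality of $(b^*, b^* b^{*T}, z^*)$ is equivalent to its primal feasibility together with the existence of a dual feasible $(\tau,\lambda,t,d)$ satisfying complementary slackness. Primal feasibility is immediate: the outer matrix is the rank-one PSD outer product $\begin{bmatrix}1\\ b^*\end{bmatrix}\begin{bmatrix}1 & b^{*T}\end{bmatrix}$, each $2\times 2$ block is either rank-one PSD ($i\in S$) or zero ($i\notin S$), and $e^T z^* = k$. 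Two identities will be used repeatedly: Woodbury on $M = (I_n + \rho^{-1} X_S X_S^T)^{-1}$ gives $M y = y - X b^*$, and the ridge normal equation $(X_S^T X_S + \rho I) b^*_S = X_S^T y$ then gives $X_i^T M y = \rho b^*_i$ for $i \in S$.

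For the ``only if'' direction, let $(\tau,\lambda,t,d)$ be a dual optimum. Since the primal outer matrix has rank one with range spanned by $\begin{bmatrix}1\\ b^*\end{bmatrix}$, complementary slackness forces the dual slack $A := \begin{bmatrix}\tau & -y^TX-t^T\\ -X^Ty-t & X^TX+\rho I - \D(d)\end{bmatrix}$ to annihilate this vector. The top row yields $\tau = (y^T X + t^T) b^*$, and the remaining rows, combined with the ridge normal equation, yield $t_i = -d_i b^*_i$ for $i \in S$ and $t_i = -X_i^T M y$ for $i \notin S$. For each $i \in S$ the primal block is rank one with range spanned by $\begin{bmatrix}1\\ b^*_i\end{bmatrix}$, so complementary slackness forces $\begin{bmatrix}\lambda & t_i\\ t_i & d_i\end{bmatrix}\begin{bmatrix}1\\ b^*_i\end{bmatrix} = 0$, which gives $\lambda = d_i b^{*2}_i$. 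Substituting $b^*_i = \rho^{-1} X_i^T M y$ produces $\rho^2 \lambda = d_i (X_i^T M y)^2$. For $i \notin S$ the primal block vanishes and only dual feasibility $\lambda d_i \geq t_i^2 = (X_i^T M y)^2$ remains. Finally, the lower-right block of $A \succeq 0$ requires $X^T X + \rho I - \D(d) \succeq 0$. The rescaling $\tilde d := d/\rho$, $\tilde \lambda := \rho \lambda$ then turns these three statements into (\ref{cond:1})--(\ref{cond:3}).

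For the ``if'' direction, given $(\tilde d, \tilde \lambda)$ satisfying the three conditions, I would invert the scaling and use the formulas derived above to build a candidate dual solution. Each $B_i$ is PSD by construction (determinant zero for $i \in S$ by (\ref{cond:2}), nonnegative for $i \notin S$ by (\ref{cond:3})), and $A \succeq 0$ follows from (\ref{cond:1}) applied to the lower-right block together with $A \begin{bmatrix}1\\ b^*\end{bmatrix} = 0$ via the Schur complement. A short calculation, using $y^T X b^* = b^{*T}(X^T X + \rho I) b^*$ from ridge optimality, shows the primal and dual objectives both equal $\frac{1}{2} y^T M y$, certifying optimality by strong duality. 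The delicate part is keeping track of the scaling between $(d, \lambda)$ and $(\tilde d, \tilde \lambda)$ and verifying the Schur complement step carefully; the rest is a mechanical KKT unwinding.
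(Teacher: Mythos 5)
Your proposal is correct and follows essentially the same route as the paper's appendix proof: KKT conditions for the pair (\ref{SDP_DCL})--(\ref{SDP_DCL:dual}), complementary slackness forcing the dual slack matrices to annihilate the rank-one primal blocks, the ridge normal equations plus Sherman--Morrison--Woodbury to rewrite everything in terms of $X_i^T M y$, and a final rescaling of $(d,\lambda)$. Your scaling $\tilde d = d/\rho$, $\tilde\lambda = \rho\lambda$ is in fact the consistent one (the paper's stated $\tilde\lambda = \lambda\rho^3$ appears to be a typo), so no changes are needed.
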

The proof of Theorem \ref{thm:dualcert} exploits the optimality conditions of (\ref{SDP_DCL}) and its dual, and is 
given in detail in the appendix section. We remark that one can directly show that the conditions in Theorem \ref{thm:dualcert}
are no stronger than those in Theorem \ref{pwg-cert}.
\begin{remark}
%The conditions in Theorem \ref{thm:dualcert} are no stronger than those in Theorem \ref{pwg-cert}. 
Suppose that $\lambda$ is the scalar such that
(\ref{pwg_1}) and (\ref{pwg_2}) hold, then (\ref{cond:1}) -- (\ref{cond:3}) hold for $\tilde{\lambda}$ and $\tilde{d}$, where 
\[
\tilde{\lambda} := \ \max \ \left\{ \left(X_j^T M y\right)^2 : i\in S\right\},  \tilde{d}_i =  \tilde{\lambda} \left(X_j^T M y\right)^{-2}, \forall i \in S,
\mbox{ and } \tilde{d}_i = 1,\forall i \notin S.
\] 
Note that $\tilde{d}_i \in [0,1]$ for all $i$ by construction. Therefore (\ref{cond:1}) holds. $(\ref{cond:2})$ and $(\ref{cond:3})$ are also valid by construction.
\end{remark}

\section{Empirical comparison on exact recovery rate for Gaussian ensemble}\label{sec:empirical}
In this section we consider the special case of Gaussian ensemble, where the design matrix $X \in \Rbb^{n\times p}$ is generated with i.i.d. N(0,1) entries.
A ``true" signal $\beta^*$ is generated to be $k$-sparse, i.e., it has only $k$ number of nonzero entries, and each nonzero entry is of the order $1/\sqrt{k}$. 
The response vector $y$ is generated by $y = X\beta^* + \epsilon$, where $\epsilon$ has i.i.d $N(0,\gamma^2)$ entries. 
The following result is established in \cite{PilanciWainwrightGhaoui2015}, which characterizes the size of $n$ needed to guarantee the exact recovery of the 
support of $\beta^*$ with high probability.

\begin{theorem}
There are constants $c_0$ and $c_1$, such that the following holds. Suppose that we are given a sample size 
$n > c_0\frac{\gamma^2 + \|\beta_S^*\|_2^2}{\beta_{min}^{*2}} \log p$, 
and that we solve (\ref{SDP_PWG}) with $\rho = \sqrt{n}$. Then with probability at least 
$1-2 e^{-c_1n}$, the relaxation (\ref{SDP_PWG}) is exact, i.e., $\nu_{PWG} = \nu_{\ell 0}$.
\end{theorem}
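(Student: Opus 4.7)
The plan is to invoke the primal certificate Theorem~\ref{pwg-cert} with the candidate support $S := \mathrm{supp}(\beta^*)$ and show that, with probability at least $1 - 2e^{-c_1 n}$, there exists a scalar $\lambda \geq 0$ strictly separating $\max_{j \notin S}|X_j^T My|$ from $\min_{j \in S}|X_j^T My|$. By Sherman--Morrison--Woodbury, $M = I_n - X_S(X_S^T X_S + \rho I_k)^{-1} X_S^T$, so $My$ is precisely the ridge residual against $X_S$; this identity drives both directions of the argument. I will build a single high-probability event on which four concentration ingredients hold simultaneously: Gaussian-matrix concentration of $X_S^T X_S$ inside $[n/2, 2n]\,I_k$, $\chi^2$ concentration of $\|\epsilon\|_2^2$ near $\gamma^2 n$, sub-Gaussianity of the coordinates of $X_S^T \epsilon$ (scale $\gamma\sqrt n$), and a conditional Gaussian union bound over the $p-k$ off-support columns.

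\textbf{Upper bound for $j \notin S$.} Since $X_j$ is independent of $(X_S, \epsilon)$ and hence of $My$, conditionally $X_j^T My \sim \mathcal{N}(0, \|My\|_2^2)$. Gaussian tails plus a union bound give
\[
\max_{j \notin S}|X_j^T My| \;\lesssim\; \|My\|_2\,\sqrt{\log p}
\]
with exponentially small failure probability. To bound $\|My\|_2$, decompose $y = X_S \beta_S^* + \epsilon$: using $MX_S = \rho X_S(X_S^T X_S + \rho I_k)^{-1}$ together with $X_S^T X_S \approx n I_k$ and $\rho = \sqrt n$, one finds $\|MX_S \beta_S^*\|_2 = O(\|\beta_S^*\|_2)$, while $\|M\epsilon\|_2 \leq \|\epsilon\|_2 \lesssim \gamma\sqrt n$ by contractivity of $M$. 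Hence $\|My\|_2^2 \lesssim \|\beta_S^*\|_2^2 + \gamma^2 n$ on the good event.

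\textbf{Lower bound for $j \in S$.} The key identity is $X_S^T My = \rho(X_S^T X_S + \rho I_k)^{-1} X_S^T y$, so $X_j^T My = \rho\,\hat\beta^{\mathrm{ridge}}_j$ where $\hat\beta^{\mathrm{ridge}} = (X_S^T X_S + \rho I_k)^{-1} X_S^T y$ is the ridge estimator restricted to $S$. Splitting $\hat\beta^{\mathrm{ridge}} = \beta_S^* - \rho(X_S^T X_S + \rho I_k)^{-1}\beta_S^* + (X_S^T X_S + \rho I_k)^{-1} X_S^T \epsilon$ and using $X_S^T X_S \approx n I_k$ with $\rho = \sqrt n$, the shrinkage bias is of order $\beta_j^*/\sqrt n$ and each coordinate of the ridge noise is of order $\gamma/\sqrt n$. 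Multiplying through by $\rho = \sqrt n$ yields
\[
|X_j^T My| \;\gtrsim\; \sqrt n\,|\beta_j^*|\;-\;O(\gamma),\qquad j \in S,
\]
and therefore $\min_{j \in S}|X_j^T My| \gtrsim \sqrt n\,\beta_{\min}^*$ on the same event.

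\textbf{Combining and main obstacle.} A separating $\lambda$ exists, and Theorem~\ref{pwg-cert} applies, as soon as $\sqrt n\,\beta_{\min}^* \gg \sqrt{\log p}\,\sqrt{\gamma^2 n + \|\beta_S^*\|_2^2}$, which after squaring and rearranging reproduces the sample-size hypothesis $n > c_0(\gamma^2 + \|\beta_S^*\|_2^2)(\beta_{\min}^*)^{-2}\log p$ (up to the choice of $c_0$). The hard part is not any single estimate but the quantitative bookkeeping: the four concentration events must be combined so their joint failure probability stays within $2e^{-c_1 n}$; the slack between upper and lower bounds must absorb residual constants (including the $O(\gamma)$ on-support noise and the $O(\beta_j^*/\sqrt n)$ shrinkage bias) into a single $c_0$; and the threshold $\lambda$ must be chosen explicitly, strictly between the two extremes, to satisfy the strict inequalities in Theorem~\ref{pwg-cert}.
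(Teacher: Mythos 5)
The paper does not prove this theorem: it is quoted verbatim from \cite{PilanciWainwrightGhaoui2015} (``The following result is established in...''), and the only thing the present paper adds is the observation that Proposition \ref{prop:dominate} transfers the conclusion to (\ref{SDP_DCL}). So there is no in-paper proof to compare against. Your reconstruction does follow the strategy of the cited source: invoke the certificate characterization of Theorem \ref{pwg-cert} with $S = \mathrm{supp}(\beta^*)$, and separate $\max_{j\notin S}|X_j^T M y|$ from $\min_{j\in S}|X_j^T M y|$ by Gaussian concentration. Your two structural identities are correct and are in fact the same ones the paper verifies in its appendix for Theorem \ref{thm:dualcert}: $M = I_n - X_S(\rho I_k + X_S^T X_S)^{-1}X_S^T$ and $X_S^T M y = \rho\,(\rho I_k + X_S^T X_S)^{-1}X_S^T y$, so $M y$ is the ridge residual and $X_j^T M y = \rho\,\hat\beta^{\mathrm{ridge}}_j$ on the support.

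That said, what you have is an outline with the hard part deferred, and one piece of the bookkeeping does not close as written. A union bound over the $p-k$ off-support columns at threshold $C\|My\|_2\sqrt{\log p}$ fails with probability only polynomially small in $p$; since $n$ may be arbitrarily large relative to $\log p$, a $p^{-c}$ failure probability is \emph{not} dominated by $2e^{-c_1 n}$. To get an exponential-in-$n$ failure probability you must either take the off-support threshold at scale $\|My\|_2\sqrt{\log p + n}$ (which then destroys the separation unless you track constants more carefully) or accept a probability of the form $1 - 2e^{-c_1\min(n,\log(p-k))}$, which is essentially what the source paper's argument actually delivers. Relatedly, your claim $\|MX_S\beta_S^*\|_2 = O(\|\beta_S^*\|_2)$ is loose: with $X_S^TX_S\approx nI_k$ and $\rho=\sqrt n$ one gets $\|MX_S\beta_S^*\|_2\approx \rho\,\|X_S(X_S^TX_S+\rho I)^{-1}\beta_S^*\|_2 \approx \sqrt n\cdot n^{-1/2}\|\beta_S^*\|_2$, i.e.\ $O(\|\beta_S^*\|_2)$ indeed, but you need the two-sided control of $X_S^TX_S$ (hence $n\gtrsim k$, which must itself be extracted from the sample-size hypothesis via $\|\beta_S^*\|_2^2\ge k\,\beta_{\min}^{*2}$) before any of these approximations are legitimate. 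None of this is fatal — it is the same bookkeeping the original authors carry out — but as submitted the proposal is a correct plan rather than a proof, and the probability exponent is the one place where the plan, taken literally, gives a weaker conclusion than the stated theorem.
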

Here $\beta_{min}^*$ is the minimal nonzero entry (in absolute value) of $\beta^*$.  Note that Proposition \ref{prop:dominate} ensures that under the same 
conditions, $\nu_{DCL} = \nu_{\ell 0}$ with (at least the same) high probability. 

In the remaining part of this section we empirically evaluate the exact recovery for the Gaussian ensemble case. We compare the 
probabilities of exact recovery by (\ref{SDP_PWG}) and (\ref{SDP_DCL}) for various $n$ and $p$. To avoid potential numerical issues in solution precision, 
we exploit Theorem \ref{pwg-cert} and Theorem \ref{thm:dualcert} to directly search for the certificates. Such a strategy enables us to test whether 
(\ref{SDP_PWG}) and (\ref{SDP_DCL}) provide the global solution to (\ref{l0_card}) on large number of simulated data sets without explicitly solving the 
semidefinite relaxations many times.

Given simulated data $(X,y,\beta^*)$, let $S$ denote the support of $\beta^*$. Let $\rho > 0$ be fixed, it is straightforward to test whether conditions in 
Theorem \ref{pwg-cert} are satisfied. If so, then (\ref{SDP_PWG}) recovers the true support of $\beta^*$. The situation is slightly more complicated 
for (\ref{SDP_DCL}) and Theorem \ref{thm:dualcert}. Here we describe a bisection algorithm to search for the dual certificates $\tilde{\lambda}$ and 
$\tilde{d}$, provided that the support of $\beta^*$ is used as the index set $S$.

\subsection{A bisection algorithm to search for the dual certificates for (\ref{SDP_DCL})}
%Using Theorem \ref{thm:dualcert}, for any binary vector $z^* \in {0,1}^p$, we can design a one-dimensional search algorithm to test
%whether the support of $z^*$ is identified by an optimal solution to (\ref{SDP_DCL}).
%%Given a fixed binary vector $z^*$, Theorem \ref{thm:dualcert} can be exploited to devise an efficient algorithm to test whether 
%%$z^*$ appears in some optimal solution to (\ref{SDP_DCL:card}). 
%As solving large scale semidefinite programming to high accuracy is computational demanding, such an algorithm enables us to evaluate the power of semidefinite relaxation 
%(\ref{SDP_DCL}) in recovering the known true support of $\beta$ for synthetic data sets.
%%In our simulation studies later, this algorithm will be used to efficiently.

%Again, we assume that $v_i \neq 0, \ \forall i \in S$, which should happen in general. 
Without loss of generality we assume that $S = \{1,...,|S|\}$.
Firstly if $X_i^T M y = 0$ for some $i \in S$, then the convex relaxation (\ref{SDP_DCL}) is not exact unless the trivial case where $X^T M y = 0$ for all $i$.
Then without loss of generality we can assume that
$\tilde{d}_i = \tilde\lambda \left(X_i^T M y\right)^{-2}$, for all $i\in S$, and
$\tilde{d}_i = \tilde\lambda^{-1} \left(X_i^T M y\right)^{2}$, for all $i\notin S$.  Therefore the problem of testing (\ref{cond:1}) -- (\ref{cond:3}) 
is then equivalent to testing whether there exists $\tilde\lambda > 0$ such that the following function is nonpositive,
\begin{equation}\label{eq:f_lambda}
f(\tilde\lambda) := \lambda_{\max} \left\{\begin{bmatrix}D_S(\tilde\lambda) & 0 \\ 0 & D_{\bar{S}}(\tilde\lambda)\end{bmatrix} - \rho^{-1} X^T X - I_p\right\},
\end{equation}
where $\lambda\{\cdot\}$ is the largest eigenvalue function,  $D_S(\tilde\lambda)$ is a $|S|\times|S|$ diagonal matrix with diagonal entries 
$\tilde\lambda \left(X_i^T M y\right)^{-2}, i=1,...,|S|$,
and similarly $D_{\bar{S}}(\tilde\lambda)$ is a diagonal matrix with diagonal entries $\tilde\lambda^{-1} \left(X_i^T M y\right)^{2}, i =|S|+1,...,p$. 

Note that $f(\tilde\lambda)$ is a convex function when $\tilde\lambda>0$. This is because $\lambda_{\max}\{\cdot\}$ is a convex function
and non-decreasing in terms of diagonal entries, and $\tilde\lambda^{-1}$ is convex when $\tilde\lambda > 0$. It is known that a subgradient of 
$f(\tilde\lambda)$ can be computed from an eigenvector associated with the largest eigenvalue in (\ref{eq:f_lambda}). 
Indeed, let $u$ be such an eigenvector, then 
\[
h(\tilde\lambda) := \sum_{i\in S}\left(X_i^T M y\right)^{-2} u_i^2 - \lambda^{-2} \sum_{i \in \bar{S}} \left(X_i^T M y\right)^{2} u_i^2 \in \partial f(\tilde\lambda).
\]
In other words, given any $\hat{\lambda}>0$, $f(\lambda) \geq f(\hat\lambda) + h(\hat\lambda) (\lambda- \hat\lambda)$ for all $\lambda > 0$.
A final ingredient needed for a bisection algorithm is the initial interval. Consider the diagonal entries of the matrix in (\ref{eq:f_lambda}),
obviously if $\tilde\lambda \left(X_i M y\right)^{-2} - (\rho^{-1}X_i^T X_i + 1) \geq 0$ for some $i \in S$, or 
$\lambda^{-1} \left(X_i M y\right)^{2} - (\rho^{-1}X_i^T X_i + 1) \geq 0$ for some $i \notin S$, then $f(\tilde\lambda) \geq 0$. 
Therefore we can restrict ourself in a region such that $\tilde\lambda \left(X_i M y\right)^{-2} - (\rho^{-1}X_i^T X_i + 1) \leq 0$ for all $i \in S$,
and $\lambda^{-1} \left(X_i M y\right)^{2} - (\rho^{-1}X_i^T X_i + 1) \leq 0$ for all $i \notin S$.
This provides initial upper and lower bounds such that if there exists $\hat\lambda$ such that $f(\hat{\lambda}) < 0$, $\hat\lambda$ must be in
\[
\left[\max_{i \in \bar{S}} \left\{\left(X_i M y\right)^{2} (\rho^{-1}X_i^T X_i + 1)^{-1}\right\}, \min_{i\in S} \left\{\left(X_i M y\right)^{2} (\rho^{-1}X_i^T X_i + 1)\right\}\right].
\]
Then the problem of testing whether there is $\tilde{\lambda}$ such that $f(\tilde{\lambda})$ can be solved by the following bisection algorithm, 
%where the output is YES ($f(\lambda) \leq 0$ for some $\lambda >0 $) or NO. 
\begin{enumerate}
\item Start with $\ell = \max_{i \in \bar{S}} \left\{\left(X_i M y\right)^{2} (\rho^{-1}X_i^T X_i + 1)^{-1}\right\}$ and 
$u =  \min_{i\in S} \left\{\left(X_i M y\right)^{2} (\rho^{-1}X_i^T X_i + 1)\right\}$;
\item Let $\hat{\lambda} = \frac{\ell+u}{2}$ and evaluate $f(\hat\lambda)$;
\item If $f(\hat\lambda) \leq 0$, return YES; otherwise compute $h(\hat\lambda)$;  
\item If $h(\hat\lambda) = 0$, return NO. If $h(\hat\lambda) > 0$, $u \leftarrow \hat{\lambda} -\frac{\hat\lambda}{h(\hat\lambda)}$; otherwise if $h(\hat\lambda) < 0$, 
$\ell \leftarrow \hat{\lambda} -\frac{\hat\lambda}{h(\hat\lambda)}$. If $u-\ell > \epsilon$, where $\epsilon$ is a fixed precision tolerance, then go to step 2. Otherwise return NO.
\end{enumerate}

\subsection{Numerical simulations}
Using this bisection algorithm, we conduct similar experiments as shown in Figure 1 of \cite{PilanciWainwrightGhaoui2015}. For each value of $p$ (denoted as $d$ in all plots), the true
sparsity is set as $\left\lceil \sqrt{p} \right\rceil$, and the number of data points $n=\alpha k \log(p-k)$. The true signal $\beta^{*}_i, (i=1,...k)$ is generated to be 1 or -1 with same probability. The 
Figures \ref{fig:1} through \ref{fig:6} show the exact support recovery rate for $\alpha \in [1,10]$, when $\rho$ are chosen to be $2 \sqrt{n}, 3\sqrt{n}, 4\sqrt{n}, 6\sqrt{n}, 8\sqrt{n}, 12\sqrt{n}$. 

The numerical simulation illustrates that (\ref{SDP_DCL}) can recover the support of true signals with significant less data points than that of (\ref{SDP_PWG}). Also the exact 
recovery rate of (\ref{SDP_DCL}) appears to be much less sensitive to the choice of $\rho$. This result further motivates us to study scalable approximate methods, such
as those based on low rank factorization of the matrix $B$, to solve (\ref{SDP_DCL}).

\begin{figure}[htbp]
\begin{center}
\includegraphics[scale=0.5]{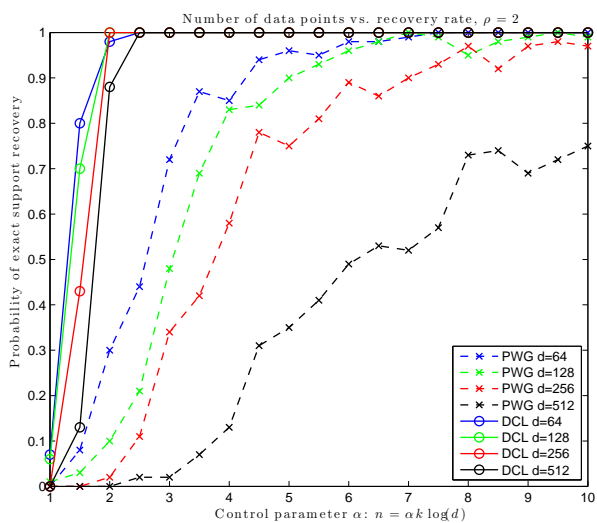}
\caption{Exact support recovery rate when $\rho = 2\sqrt{n}$.}
\label{fig:1}
\end{center}
\end{figure}

\begin{figure}[htbp]
\begin{center}
\includegraphics[scale=0.5]{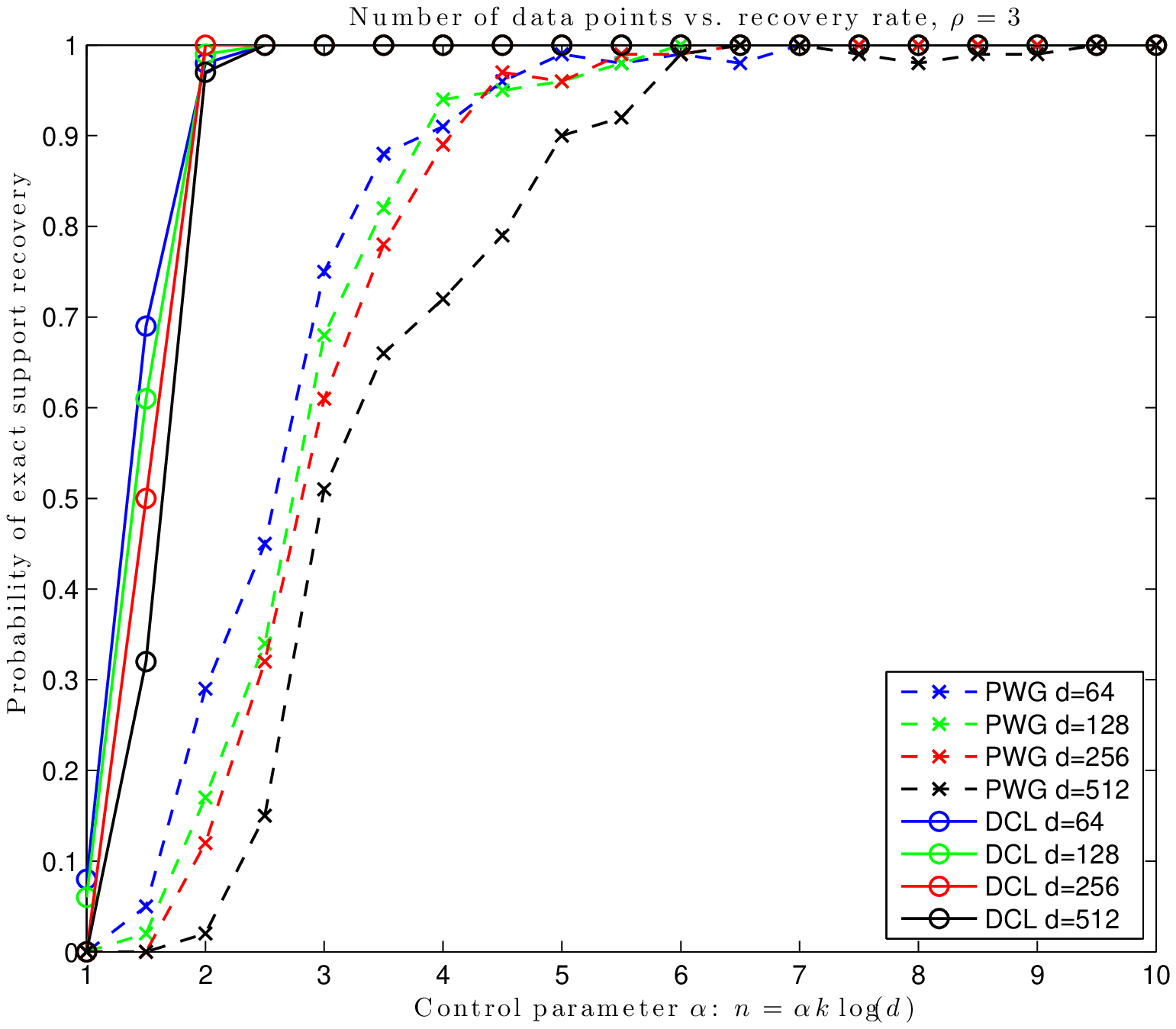}
\caption{Exact support recovery rate when $\rho = 3\sqrt{n}$}
\label{fig:2}
\end{center}
\end{figure}

\begin{figure}[htbp]
\begin{center}
\includegraphics[scale=0.5]{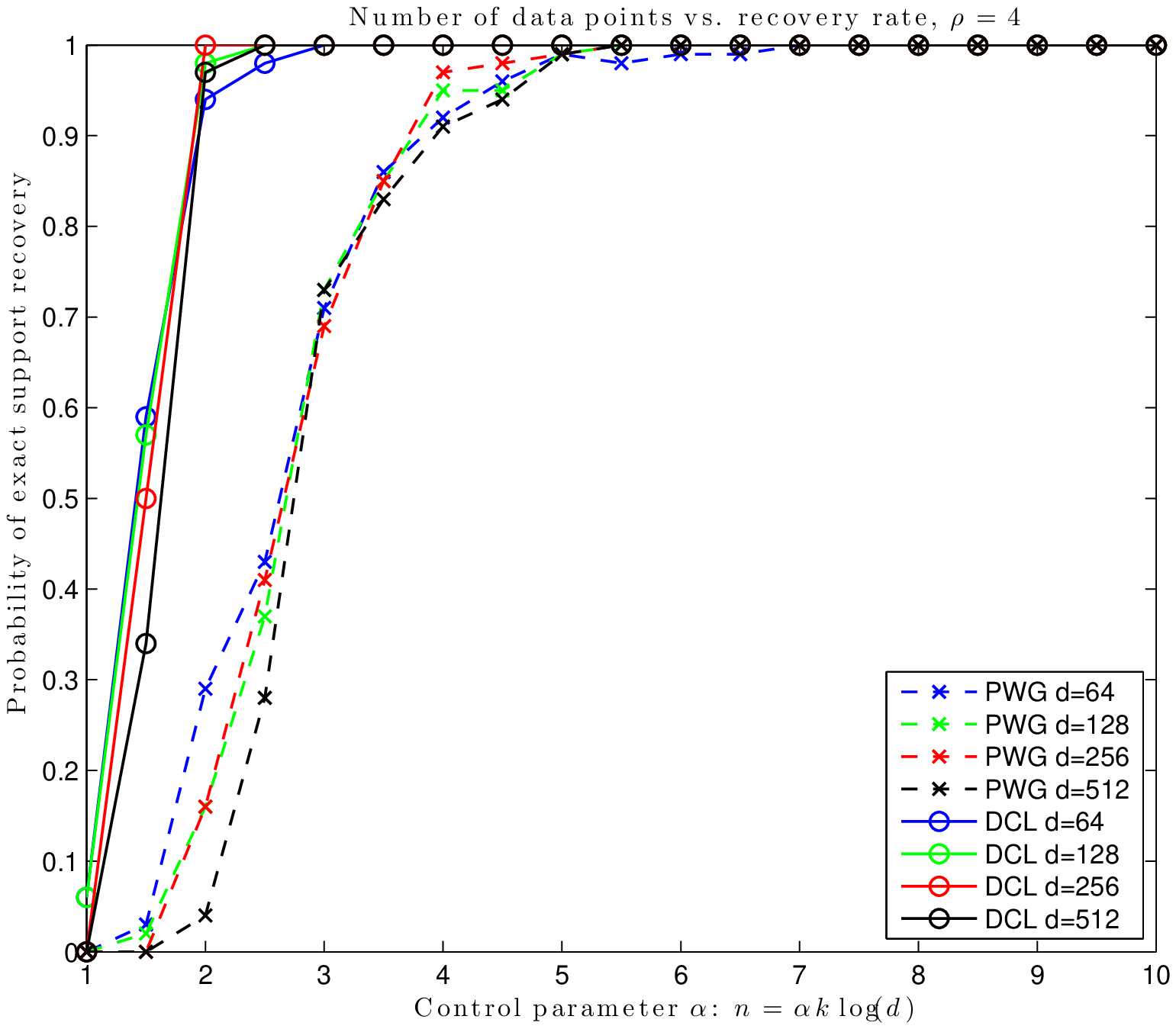}
\caption{Exact support recovery rate when $\rho = 4\sqrt{n}$}
\label{fig:3}
\end{center}
\end{figure}

\begin{figure}[htbp]
\begin{center}
\includegraphics[scale=0.5]{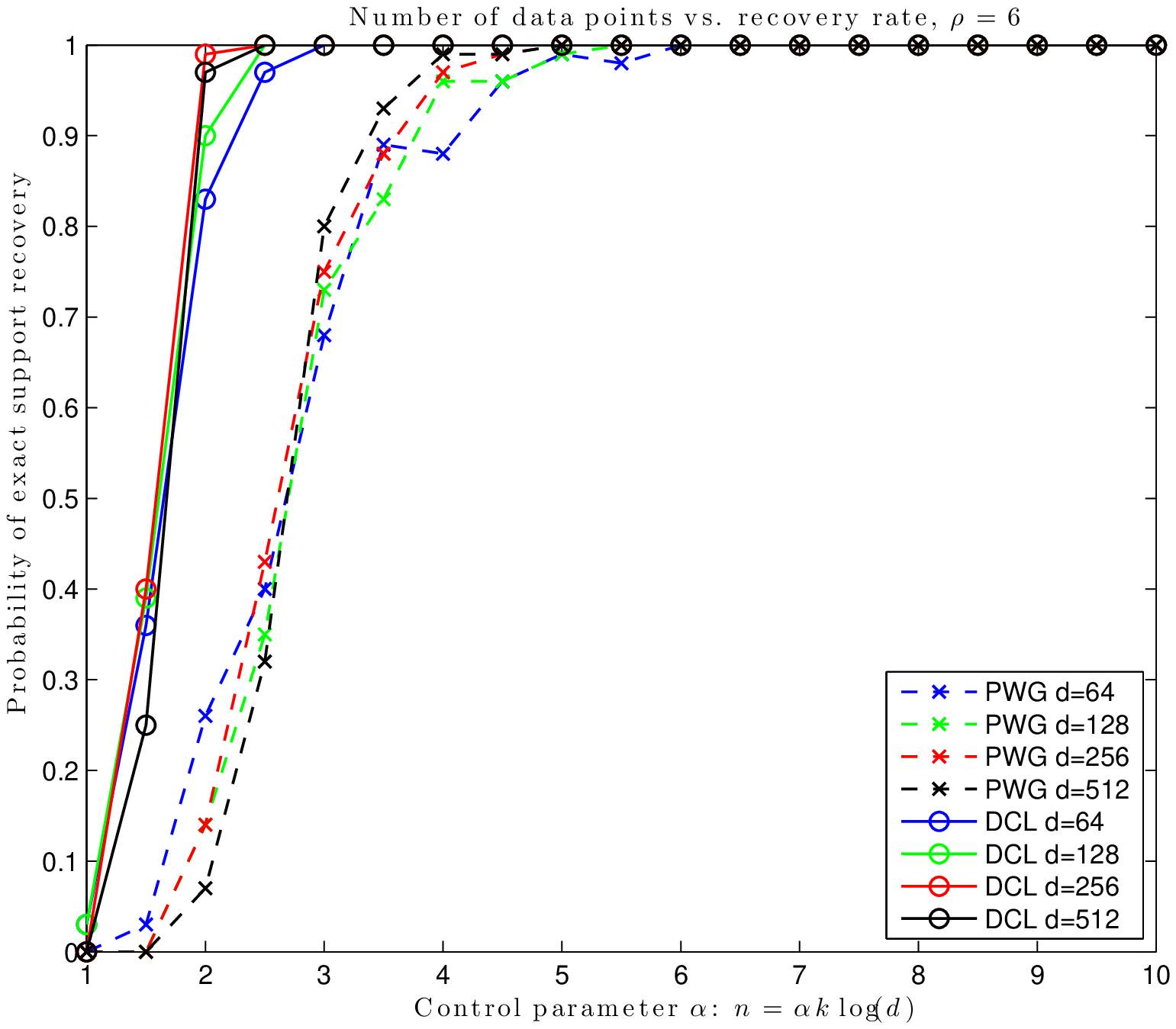}
\caption{Exact support recovery rate when $\rho = 6\sqrt{n}$}
\label{fig:4}
\end{center}
\end{figure}

\begin{figure}[htbp]
\begin{center}
\includegraphics[scale=0.5]{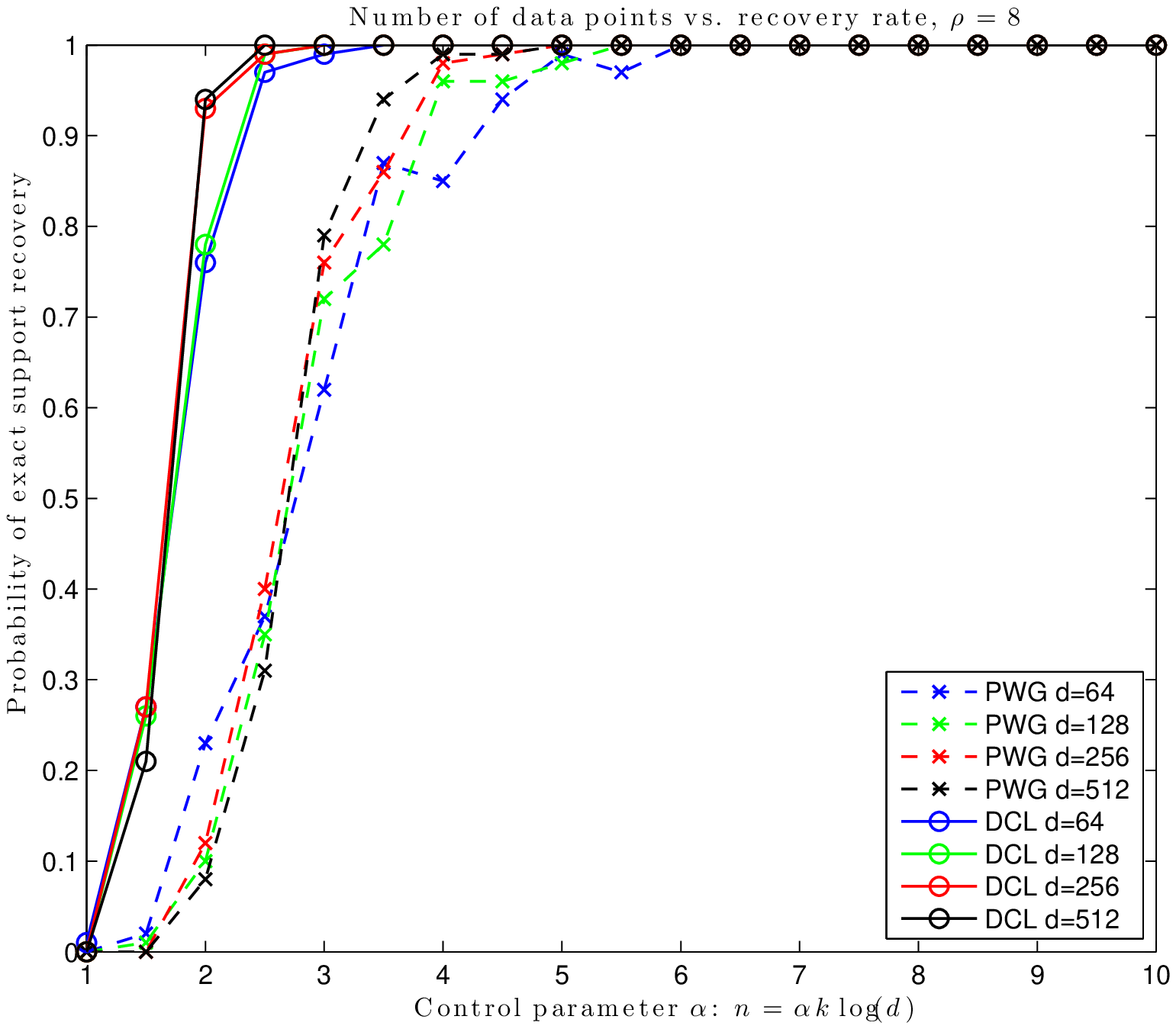}
\caption{Exact support recovery rate when $\rho = 8\sqrt{n}$}
\label{fig:5}
\end{center}
\end{figure}

\begin{figure}[htbp]
\begin{center}
\includegraphics[scale=0.5]{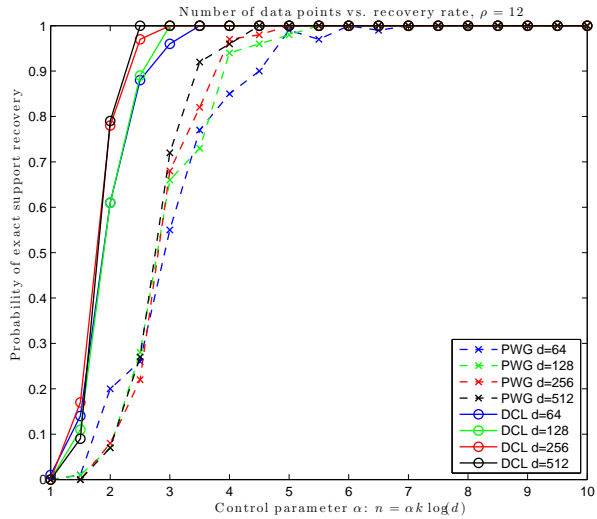}
\caption{Exact support recovery rate when $\rho = 12\sqrt{n}$}
\label{fig:6}
\end{center}
\end{figure}

\newpage

\bibliographystyle{plain}
\bibliography{SDPcomp}

\begin{thebibliography}{1}

\bibitem{DongChenLinderoth2015}
Hongbo Dong, Kun Chen, and Jeff Linderoth.
\newblock {Regularization vs. Relaxation: A conic optimization perspective of
  statistical variable selection}.
\newblock Submitted to Math. Prog. A, 2015.

\bibitem{PilanciWainwrightGhaoui2015}
Mert Pilanci, Martin~J. Wainwright, and Laurent~El Ghaoui.
\newblock {Sparse learning via Boolean relaxations}.
\newblock {\em Mathematical Programming (Series B)}, 151:63--87, 2015.

\end{thebibliography}

\section*{Appendix}
\textbf{Proof of Theorem \ref{thm:dualcert}.}
\begin{proof}
Without loss of generality we assume $S = \{1,...,k\}$.
When $z = z^*$, the constraints in (\ref{SDP_DCL}) enforce that $B_{ii} = b_i = 0, \forall i \notin S$. Further since
$z^*$ is binary, we may assume $B^* = b^* b^{*T}$ without loss of generality, where $b^*$ solves the restricted regression
problem:
\[
b_S^* \in \arg\min_{\beta\in \Rbb^{|S|}} \left\|X_S\beta - y\right\|_2^2 + \rho\|\beta\|^2 \ \ 
\Rightarrow \ \ b^* = \begin{bmatrix}\left(\rho I + X_S^T X_S \right)^{-1} X_S^T y \\ \mathbf{0}_{|S^c|}\end{bmatrix}
\]
and $\mathbf{0}_{|S|}$ is a $|S|\times 1$ zero vector.
By strong duality and the KKT conditions, there exists $b^*$, $B^*$ such that $(b^*,B^*,z^*)$ is optimal in
(\ref{SDP_DCL}) if and only if there exists dual variables $(\tau, \lambda, t, d)$ such that the first order
optimality condition holds
\begin{align}
\begin{bmatrix}
\tau & -y^T X - t^T \\
-X^T y - t & X^T X + \rho I - \D(d)
\end{bmatrix} \succeq 0 \label{opt:1}\\
 \begin{bmatrix}
\lambda & t_i  \\t_i & d_i \end{bmatrix} \succeq 0 ,\forall i  \label{opt:2} \\
\begin{bmatrix}
\tau & -y^T X - t^T \\
-X^T y - t & X^T X + \rho I - \D(d)
\end{bmatrix} \bullet 
\begin{bmatrix} 1& b^{*T} \\ b^*& b^*b^{*T}\end{bmatrix} = 0  \label{opt:3} \\
\begin{bmatrix}\lambda & t_i  \\t_i & d_i\end{bmatrix} 
\bullet
\begin{bmatrix} 1 & b^*_i \\ b^*_i& (b^*_i)^2\end{bmatrix} = 0, \ \ \forall i\in S \label{opt:4} 
\end{align}

In (\ref{opt:4}), only complementarity conditions for $i \in S$ are needed because for all $j \notin S$, $z_j^*$ equals 0, 
which implies that $b^*_j = 0$ by feasibility and thus the complementarity condition holds.
Now we aim to derive simpler conditions on the existence of such dual variables $(\tau,\lambda, t,d)$. 
With condition (\ref{opt:1}), (\ref{opt:3}) can be equivalently written as
\[
\begin{bmatrix}
\tau & -y^T X - t^T \\
-X^T y - t & X^T X + \rho I - \D(d)
\end{bmatrix}  
\begin{bmatrix} 1\\ b^*\end{bmatrix} = 0,
\]
which is further equivalent to the following two equations (\ref{tasd}) and (\ref{tauasd}), %allows us to represent $t$ and $\tau$ as a linear expression of $d$:
\begin{align}
 t &= (X^T X + \rho I -\D(d))b^* - X^T y \label{tasd}\\
\tau &= b^{*T} t + y^T X b^* = b^{*T} (X^T X + \rho I -\D(d)) b^*. \label{tauasd}
\end{align}
Now we exploit (\ref{tasd}) and (\ref{tauasd}) to eliminate $t$ and $\tau$ in (\ref{opt:1}), %the matrix in (\ref{opt:1}) can be factored as
\[
\begin{bmatrix}
\tau & -y^T X - t^T \\
-X^T y - t & X^T X + \rho I_p - \D(d)
\end{bmatrix} 
= \begin{bmatrix}b^{*T} \\ I_p \end{bmatrix}\begin{bmatrix}X^T X + \rho I - \D(d)\end{bmatrix}
 \begin{bmatrix}b^* & I_p \end{bmatrix}.
\]
Therefore conditions (\ref{opt:1}) and (\ref{opt:3}) are equivalent to (\ref{tasd}), (\ref{tauasd}) and 
\begin{align}
X^T X + \rho I_p - \D(d) \succeq 0. \label{opt:smallpsd}
\end{align}
Now we consider conditions (\ref{opt:2}) and (\ref{opt:4}). Again with (\ref{opt:2}), (\ref{opt:4}) is equivalent to
\[
\begin{bmatrix}\lambda & t_i  \\t_i & d_i\end{bmatrix} 
\begin{bmatrix} 1 \\ b^*_i\end{bmatrix} = 0, \forall i\in S \ \Longleftrightarrow \ t_i = - d_i b^*_i, \lambda = -t_i b^*_i = d_i (b_i^*)^2,  \forall i\in S
\]
We claim that for all $i\in S$, $t_i = -d_i b^*_i$ is implied by (\ref{tasd}). Indeed, as $b^*$ minimizes the convex quadratic form in the restricted subspace
corresponding to $S$,
\begin{align*}
0 = \left. \frac{d}{d\beta_i} \right|_{\beta = b^*} \|X_S \beta - y\|_2^2 + \rho \|\beta\|^2 & = 2X_i^T (X_S b^* - y) + 2\rho b^*_i,
\ \ \ \forall i\in S, \\
&= 2 \left(X_i^T X b^* - X_i^Ty + \rho b_i^*\right).
\end{align*}
So the i-th row of (\ref{tasd}) can be equivalently written as,
\[
t_i = X_i^T (X b^* - y) + \rho b_i^* - d_i b_i^* = -d_i b_i^*, \ \ \forall i\in S.
\]
Therefore conditions (\ref{opt:2}) and (\ref{opt:4}) can be simplified as,
\begin{align*}
d_i  \geq 0,  \qquad & \forall i=1,...,p, \\
\lambda  = d_i (b_i^*)^2,  \qquad & \forall i \in S, \\ 
\lambda d_i  \geq t_i^2,   \qquad & \forall i \notin S.
\end{align*}
Note that for all $i \notin S$, $b^*_i = 0$. So by (\ref{tasd}), $t_i = X_i^T (X b^* - y)$ for all $i \notin S$.
Therefore the optimality conditions (\ref{opt:1}) -- (\ref{opt:4}) are equivalent to 
\[
(\ref{tasd}), \ (\ref{tauasd}), \ (\ref{opt:smallpsd}), \ \lambda = d_i (b_i^*)^2, \forall i\in S, \ \lambda d_i \geq \left(X_i^T (X b^* - y)\right)^2, \forall i\notin S.
\]
Note that (\ref{tasd}) and (\ref{tauasd}) simply state that and $t$ and $\tau$ are uniquely determined once $d$ is fixed, where $t$ and $\tau$
do not appear in other conditions. To complete the proof it suffices to prove two sets of equalities: 
\begin{equation}\label{finaleq1}
b^*_i = \rho^{-1} X_i^T (Xb^* - y), \qquad \forall i \in S,
\end{equation}
and 
\begin{equation}\label{finaleq2}
-X_i^T(Xb^* - y) = X_i^T ( \rho I + X_S X_S)^{-1} y, \qquad \forall  i.
\end{equation} 
Our conclusion then follows after a rescaling $\tilde{d} = d_i \rho^{-1}$ and $\tilde{\lambda} = \lambda \rho^3$.
Indeed, the equalities (\ref{finaleq1}) and (\ref{finaleq2}) can be proved by using the Sherman-Morrison-Woodbury formula,
\begin{align*}
X_i^T (Xb^* - y) &= X_i^T \left(X_S \left(\rho I + X_S^T X_S\right)^{-1} X_S^T y - y \right)  \\
&=  - X_i^T \left(I - X_S \left(\rho I + X_S^T X_S\right)^{-1} X_S^T \right) y \\
& = - X_i^T \left( I + \rho^{-1} X_S X_S^T \right)^{-1} y;  \qquad \forall i\\
\\
b_S^* &= \left(\rho I + X_S^T X_S\right)^{-1} X_S^T y \\
&=\left[\rho^{-1} I - \rho^{-2}X_S^T \left(I + \rho^{-1}X_S X_S^T\right)^{-1} X_S \right]X_S^T y \\
&=\rho^{-1}X_S^T \left(I + \rho^{-1}X_S X_S^T\right)^{-1} y.
\end{align*}
\end{proof}

\end{document}